\def\1{\bm{1}}
\def\epsilon{{\varepsilon}}
\def\vomega{{\bm{\omega}}}
\def\vx{{\bm{x}}}
\def\vy{{\bm{y}}}
\DeclareMathAlphabet{\mathsfit}{\encodingdefault}{\sfdefault}{m}{sl}
\SetMathAlphabet{\mathsfit}{bold}{\encodingdefault}{\sfdefault}{bx}{n}
\newcommand{\E}{\mathbb{E}}
\newcommand{\R}{\mathbb{R}}
\definecolor{mydarkblue}{rgb}{0,0.08,0.45}
\definecolor{mygreen}{rgb}{0.032, 0.6392, 0.2039}
\definecolor{mypurple}{HTML}{B266FF}
\def\xx{{\boldsymbol x}}
\def\xx{{\boldsymbol x}}
\newcommand\ignore[1]{}
\newtheorem{thminformal}{Theorem (informal)}
\newtheorem{theorem}{Theorem}
\definecolor{DarkGreen}{rgb}{0.1,0.5,0.1}
\definecolor{DarkRed}{rgb}{0.5,0.1,0.1}
\definecolor{DarkBlue}{rgb}{0.1,0.1,0.5}
\let\svthefootnote\thefootnote
\newcommand\freefootnote[1]{%
  \let\thefootnote\relax%
  \footnotetext{#1}%
  \let\thefootnote\svthefootnote%
}
\title{The Peril of Popular Deep Learning \\ Uncertainty Estimation Methods}  %
\author{%
  Yehao Liu$^\star$ \\
  EPFL\\
  \And
  Matteo Pagliardini$^\star$\\
  EPFL\\
  \And
  Tatjana Chavdarova\\
  UC Berkeley\\
  \And
  Sebastian U. Stich\\
  CISPA\\
}
\begin{document}
\maketitle

\begin{abstract}
Uncertainty estimation (UE) techniques---such as the Gaussian process (GP), Bayesian neural networks (BNN), Monte Carlo dropout (MCDropout)---aim to improve the interpretability of machine learning models by assigning an estimated uncertainty value to each of their prediction outputs. 
However, since too \textit{high} uncertainty estimates can have fatal consequences in practice, this paper analyzes the above techniques.
Firstly, we show that GP methods always yield \textit{high} uncertainty estimates on out of distribution (OOD) data.
Secondly, we show on a 2D toy example that both BNNs and MCDropout do \textit{not} give high uncertainty estimates on OOD samples.
Finally, we show empirically that this pitfall of BNNs and MCDropout holds on real world datasets as well.
Our insights 
(i)~raise awareness for the more cautious use of currently popular UE methods in Deep Learning, 
(ii) encourage the development of UE methods that approximate GP-based methods---instead of BNNs and MCDropout, and
(iii) our empirical setups can be used for verifying the OOD performances of any other UE method. The source code is available at \href{https://github.com/epfml/uncertainity-estimation}{\url{https://github.com/epfml/uncertainity-estimation}}.
\end{abstract}

\section{Introduction}

\freefootnote{$^\star$Equal contribution. \\ \hspace*{2.05em} Contact:
\href{mailto:yehao.liu@epfl.ch}{$\{$yehao.liu},
\href{mailto:matteo.pagliardini@epfl.ch}{matteo.pagliardini$\}$@epfl.ch},
\href{mailto:tatjana.chavdarova@berkeley.edu}{tatjana.chavdarova@berkeley.edu},
\href{mailto:stich@cispa.de}{stich@cispa.de}.
}

While the complexity of deep learning models allows them to outperform humans on a growing number of tasks, it often renders them \textit{uninterpretable} to human users.  This limits the use of these methods in applications where decisions can have significant consequences, such as for instance in the fields of health care, finance or autonomous driving.
Uncertainty estimation (UE) methods aim to improve model interpretability, by associating an estimate of its uncertainty to each output~\citep{kim2016}.
As such, uncertainty estimates are expected to be high in value whenever we give as input a sample that is out of the distribution (OOD) of the dataset that the model was trained with.

Gaussian processes (GP) are considered as the gold standard for UE~\citep{rasmussen2005gp,van2021feature}.
Similarly, Bayesian Neural Networks (BNNs) yield  mathematically grounded UE methods that extends standard neural network (NN) training to posterior inference.
However GPs and BNNs do not scale well with the dimension of the input data and the parameters space, respectively, and are often infeasible for real-world problems.
Thus, one of the most popular methods in deep learning is Monte Carlo Dropout (MCDropout), which method applies Dropout~\citep{srivastava2014dropout} at inference time to compute uncertainty estimates.
An active branch of work is focused on finding better and more efficient approximation methods for UE. 
In order for the community to move forward in the right direction, it is important to verify that the these UE methods work well and to understand their differences and  flaws. \looseness=-1

\noindent\textbf{Overview of contributions.} 
We (i) show on a simplistic 2D example that 
surprisingly \textit{both} BNNs and MCDropout methods might estimate low uncertainty  for OOD data, while GP can detect OOD samples.
We (ii) we empirically demonstrate that the above perils of BNNs and MCDropout  occur in real-world situations such as on MNIST---for OOD data generated by interpolation between two real samples---and ResNet-18 trained on CIFAR-10---for samples from CIFAR-100 that were not used during training.
Finally, we (iii) argue analytically on a simple example why GP methods succeed.

\section{Background and Related Works}\label{sec:background}
Gaussian processes~\citep[GPs, ][]{rasmussen2005gp} 
are a non-parametric distance aware output function.
GPs model the similarity between data points
with a kernel function, and use the Bayes rule to model a \textit{distribution over functions} by maximizing the marginal likelihood~\citep{rasmussen2005gp}. %
As such, GPs require access to the full datasest at inference time, and although there exsit approximations, this family of methods does \textit{not} scale well with the dimension of the data.   

Bayesian Neural Networks (BNNs) are stochastic neural networks trained using a
Bayesian approach. While standard neural network training performs a maximum likelihood estimation (MLE) of the parameters of the network $\bm{\omega}\! \in \!\Omega$, %
training BNNs extends to %
estimating the posterior distribution.
While the  formulation is straightforward (see App.~\ref{app:uncertainty}), due to the integration with respect to the whole parameter space $\Omega$ this computation is intractable for DNNs. 
In this work, we train the BNNs with two common training methods: \ 
Mean-Field Variational Inference (MFVI)~\cite{blei2017}, and Hamiltonian Monte Carlo (HMC)~\cite{neal2012}---the latter expected to be more accurate. %

Lastly, Monte Carlo Dropout (MCDropout) applies Dropout at inference time allows for computationally efficient UE.
\cite{gal2016dropout}  
show that \textit{Dropout}~\citep{srivastava2014dropout}
approximates Bayesian inference of a GP. 
Given $M$ models $\{\mathcal{C}_{\vomega}^{m}\}_{m=1}^M$ sampled with MC Dropout, where each outputs non-scaled values---``logits'', we define %
$\hat{\vy} \in \R^C$ as the average prediction:
$
    \hat{\vy} \! = \! \frac{1}{M} \smash{\sum_{m=1}^M} \mathop{\rm Softmax}\big(\mathcal{C}^{m}_\vomega(\vx)\big).
$
Given $\hat{\vy}$, there are several ways to estimate the model's uncertainty, see~\citep[][ \S 3.3]{Gal2016Uncertainty}.
We use the \textit{entropy} of the output distribution %
(over the classes) to quantify the uncertainty estimate of a given sample $\vx$:
$
    \mathcal{H}(\vx, \vomega)\! =\! - \sum_{c \in C}  \hat{\vy}_c \log \hat{\vy}_c \,.
$

\noindent\textbf{Relevant Works}.
Closest to ours  are \citep{amersfoort2020} and the recent~\citep{van2021feature}  where the authors point out on a 2D toy example that the uncertainty estimates of Deep Ensembles~\citep{deepensemb17} and Deep Kernel Learning~\citep{wilson2015deep} models, respectively, are  \textit{low} on points that are ``far away'' from the regions where the training data are located.
\citet{foong2019} point out limitations in the expressiveness of the predictive uncertainty estimate given by mean-field variational inference, in particular that the method fails to give calibrated uncertainty estimates in between separated regions of observations.  \looseness=-1 %

\begin{figure*}[tb]
    \centering
    \hfill
    \begin{subfigure}[b]{0.222\textwidth}
        \centering
        \includegraphics[width=\textwidth]{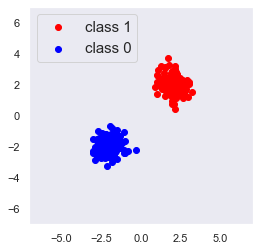}
        \caption[Location of data points]%
        {Dataset}  %
        \label{fig:illustration-data}
    \end{subfigure}
    \hfill
    \begin{subfigure}[b]{0.22\textwidth}   
        \centering 
        \includegraphics[width=\textwidth,trim={0 0 1.7cm 0},clip]{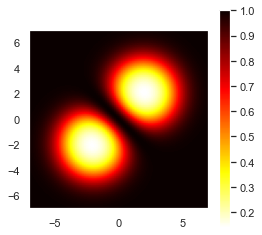}
        \caption[Value of GP predict uncertainty. The lighter the lower.]%
        {Gaussian Process}    
        \label{fig:illustration-gp}
    \end{subfigure}
    \hfill
    \begin{subfigure}[b]{0.22\textwidth}   
        \centering 
        \includegraphics[width=\textwidth,trim={0 0 1.7cm 0},clip]{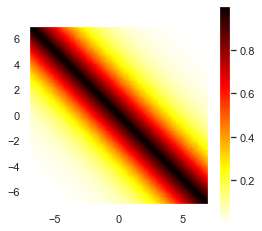}
        \caption[]%
        {MCDropout}    
        \label{fig:illustration-MCD}
    \end{subfigure}
    \hfill
    \begin{subfigure}[b]{0.22\textwidth}   
        \centering 
        \includegraphics[width=\textwidth,trim={0 0 1.7cm 0},clip]{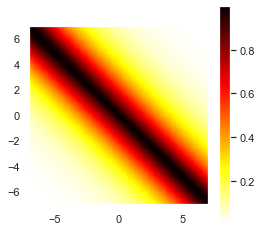}
        \caption[]%
        {BNN}    
        \label{fig:illustration-BNN}
    \end{subfigure}
    \hfill\null
    \caption{
    Uncertainty estimates by GP and MCDropout on a 2D dataset, depicted in (\subref{fig:illustration-data}). %
    We display the uncertainty estimates of GP (\subref{fig:illustration-gp}), MCDropout (\subref{fig:illustration-MCD}), and BNN trained with HMC (\subref{fig:illustration-BNN}), for points of the 2D input space (the darker, the higher the uncertainty).
    See \S~\ref{sec:num_exp} for discussion and  App.~\ref{app:implementation} for implementation details.
    } 
    \label{fig:illustration}
\end{figure*}

\section{The Failure Mode of Popular Uncertainty Estimation Methods} %

We now first illustrate the behaviour of GP and MCDropout on a set of examples---see details on the experimental setups in App.~\ref{app:implementation}, and then study the GP behaviour theoretically.

\subsection{Numerical Illustrations}\label{sec:num_exp}%

\noindent\textbf{Motivating Two-Dimensional Example (Figure~\ref{fig:illustration}).}
We consider a binary classification  problem with two-dimensional bi-modal input data, generated from $\mathcal{N}(-2,-2)$ and $\mathcal{N}(2,2)$ with label $1$ and $0$, resp. 
A Bayes optimal classifier is given by $\mathop{\rm sign}(x-y)$ for ($x,y)$-coordinates.
We train a GP model and a two-layer perceptron on the training data. Both resulting models can perfectly classify the training data. 
We observe in Fig.~\ref{fig:illustration}  that GP gives high uncertainty estimates for input data far away from the two modes, while BNNs and MCDropout both assign high uncertainty only near to the decision boundary ($x\!=\!-y$) and fail to identify many OOD regions of the input space.

\begin{figure*}[!tb]
    \centering
    \begin{subfigure}[b]{1\textwidth}
        \centering
        \includegraphics[width=0.9\textwidth,trim={0 0.2cm 0 0},clip]{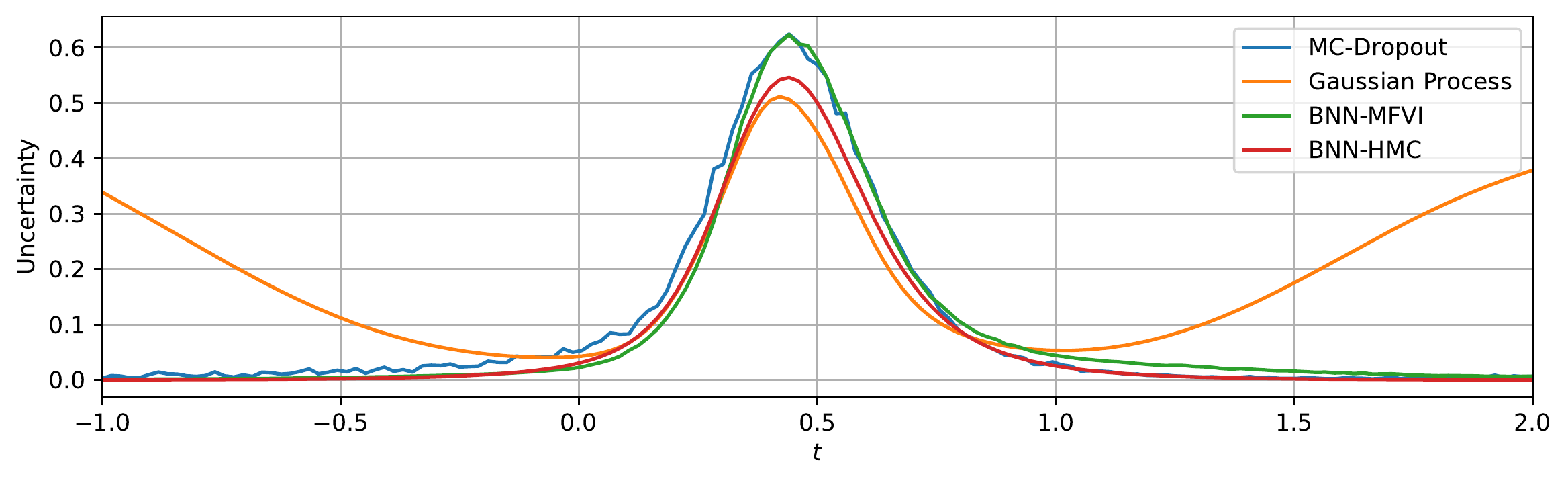}
    \end{subfigure}
    \begin{subfigure}[b]{1\textwidth}   
        \centering 
        \includegraphics[width=0.9\textwidth,trim={-1cm 0 0cm 0.2cm},clip]{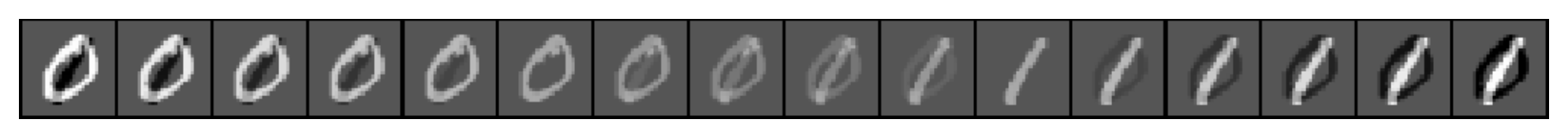}
    \end{subfigure}
    \caption[]
    {
    Estimated uncertainty by GP, BNN and MCDropout. We train a GP model, a NN classifier with dropout, and two BNNs on MNIST handwritten digits of {\tt 0} and {\tt 1}'s and display the uncertainty estimation on linear interpolated images of the form 
    $t \cdot \vx^{1} + (1-t) \cdot \vx^{0}$ 
    for $t \in [-1,2]$ (grayscale values adjusted for better display in the illustration). The BNNs were trained with two different training methods: Mean-Field Variational Inference (MFVI), and Hamiltonian Monte Carlo (HMC). 
     All methods give high uncertainty for convex combinations of training data ($t \in [0,1]$, highest uncertainty for $t \approx 0.5$).
     GP detects OOD data for $t \not\in [0,1]$ while BNN/MCDropout estimates both have low uncertainty. This demonstrates that BNN and MCDropout fail to detect this kind of OOD data. 
    }
    \label{fig:mnist}
\end{figure*}

\begin{figure}[!tbp]
  \centering
  \begin{subfigure}[t]{.28\linewidth}
  \includegraphics[width=\textwidth,clip]{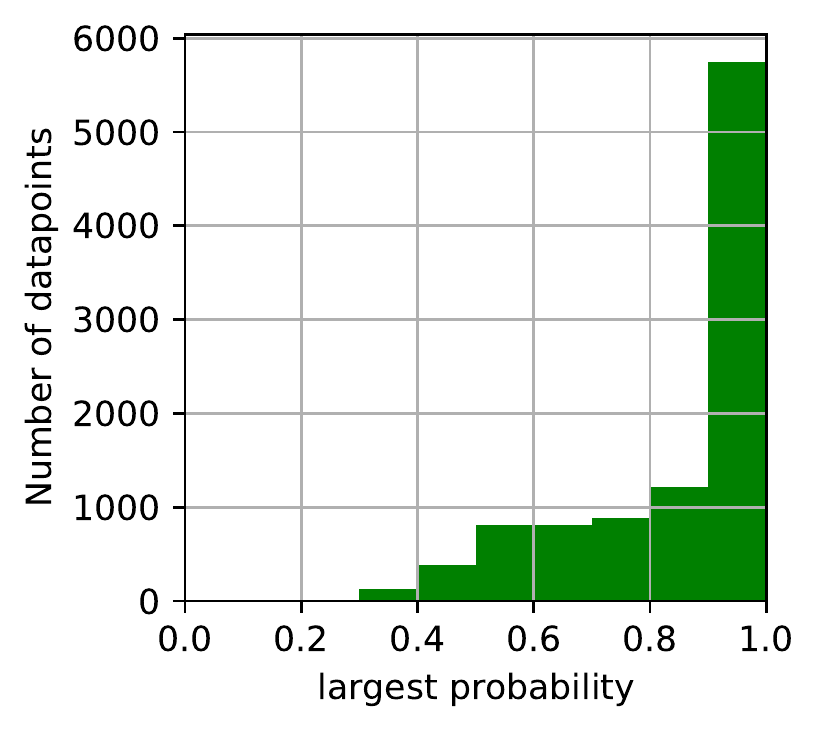}
  \caption{Without MCDropout}\label{subfig:hist-probas-cifar100-nodropout}
  \end{subfigure}
  \begin{subfigure}[t]{.28\linewidth}
  \includegraphics[width=\textwidth,clip]{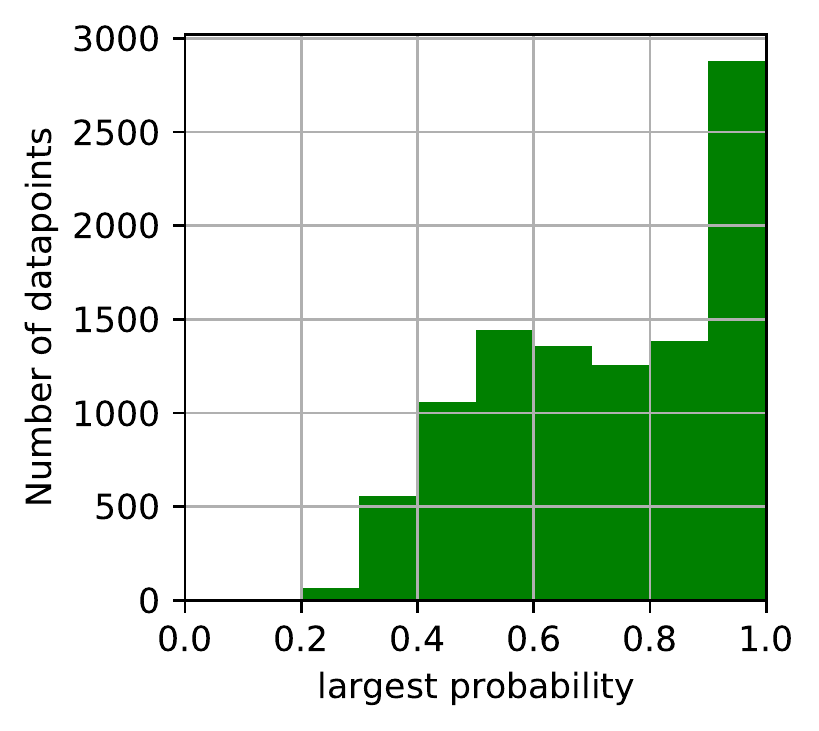}
  \caption{With MCDropout}\label{subfig:hist-probas-cifar100}
  \end{subfigure}
  \begin{subfigure}[t]{.38\linewidth}
  \includegraphics[width=\textwidth,clip]{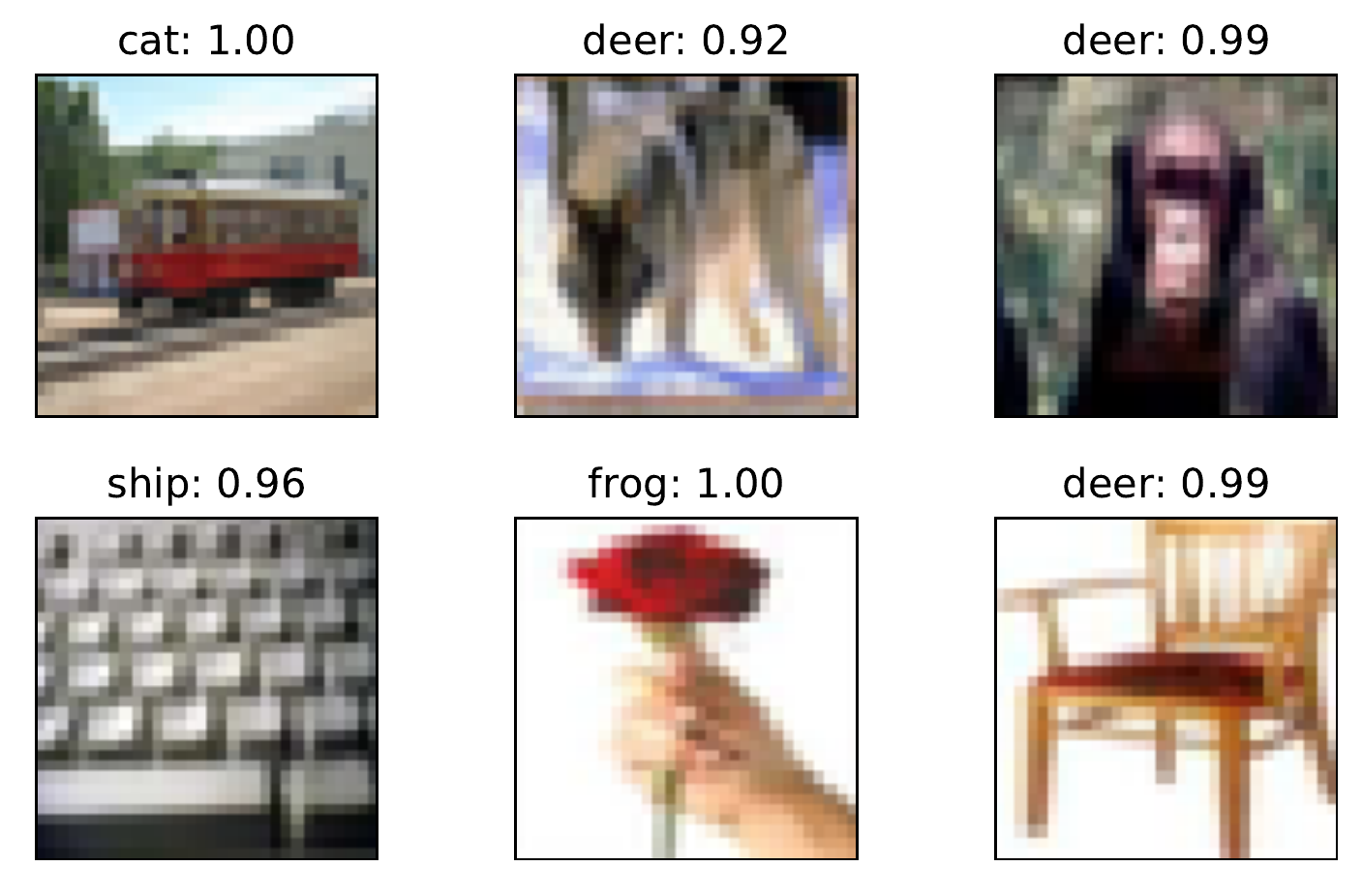}
  \caption{CIFAR100 predictions and confidence
  }\label{subfig:examples-high-proba-cifar100}
  \end{subfigure}
 \caption{MCDropout struggles detecting OOD samples:\ we trained a ResNet-18 on CIFAR-10 and used it to classify images from CIFAR-100. We observe in (\subref{subfig:hist-probas-cifar100}) that many CIFAR-100 datapoints, which are outside of the training distribution, are nonetheless inducing a confident prediction, despite using MCDropout. We show in (\subref{subfig:examples-high-proba-cifar100}) some CIFAR-100 images along with their class and confidence, obtained sampling 20 models via MCDropout. 
 }\label{fig:ood-cifar100}
\end{figure}

\noindent\textbf{MNIST data (Figure~\ref{fig:mnist}).}
We train GP and MCDropout on the digits of class {\tt 0} and {\tt 1} of the MNIST~\citep{lecun2010mnist} dataset. We first randomly select two data points of each class, $\vx^{0}$, $\vx^{1}$. We then test the models on artificial data generated by linear combination of the form $t \cdot \vx^{1} + (1-t) \cdot \vx^{0},  t \in [-1,2]$. Although these samples are not necessarily realistically looking images, we expect that the uncertainty estimates are high when $t$ is close to $0.5$, \textit{and} for $t \not\in [0,1]$. However, in Fig.~\ref{fig:mnist} we see that GP successfully detects OOD data for $t \not\in [0,1]$ while MCDropout fails.
However, we observed that MCDropout is able to detect other kinds of OOD data, such as e.g.\ the MNIST digits {\tt 2}--{\tt 9} that where not used for training the model (See Tab.~\ref{tab:mnist} in App.~\ref{app:additional}.)
To follow this experimental protocol for an arbitrary method, we provide some helper code, see App.~\ref{app:code_instructions} for details.

\noindent\textbf{CIFAR data (Figure~\ref{fig:ood-cifar100}).}
Lastly, we train ResNet-18~\citep{he2016deep} with dropout on \mbox{CIFAR-10}~\citep{Krizhevsky2012:cifar10} and measure the uncertainty with MCDropout on samples from \mbox{CIFAR-100}. In Fig.~\ref{fig:ood-cifar100} we observe that although MCDropout improves over model without MCDropout enabled, it yet fails to detect these OOD samples.

\subsection{GP Analysis on OOD samples}
The following result shows that for a test data sample that is not correlated with the training data---measured with the kernel---the estimated uncertainty is \textit{high}, the proof is omitted and given in App.~\ref{app:proof}. 

\begin{thminformal}\label{thm:1}
Consider a n-dimensional binary GP classification model with kernel $K(\xx,\xx')$.
Let $\mathbf{k}_\star$ denote the similarity vector $(\mathbf{k}_\star)_i= K(\xx_i,\xx_\star)$ between a test sample $\xx_\star$ and training data $\xx_i$, $i=1,\dots, n$. If $\|\mathbf{k}_\star\|$ is small---i.e.\ if the similarity between the test sample and the  training data points is low---then the GP prediction for $\xx_\star$ will have high uncertainity.
\end{thminformal}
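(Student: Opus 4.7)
The plan is to invoke the standard closed-form expressions for the GP posterior at a test point and show that, as $\|\mathbf{k}_\star\|$ shrinks, both (i) the posterior mean of the latent function collapses to the prior mean (zero, after centering) and (ii) the posterior variance inflates back to the prior variance $K(\xx_\star,\xx_\star)$. Under either the Laplace or EP approximation routinely used in binary GP classification, the latent posterior at $\xx_\star$ is Gaussian, $f_\star\mid \text{data}\sim\mathcal{N}(\mu_\star,\sigma_\star^2)$, with
\[
\mu_\star \;=\; \mathbf{k}_\star^{\top}\boldsymbol{\alpha},
\qquad
\sigma_\star^2 \;=\; K(\xx_\star,\xx_\star) \;-\; \mathbf{k}_\star^{\top}(\mathbf{K}+W^{-1})^{-1}\mathbf{k}_\star ,
\]
where $\boldsymbol{\alpha}\in\RR^n$ and the diagonal matrix $W\succ 0$ depend only on the training set and are therefore fixed with respect to the test point $\xx_\star$.

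Next I would bound the two quantities in terms of $\|\mathbf{k}_\star\|$. By Cauchy--Schwarz, $|\mu_\star|\le\|\mathbf{k}_\star\|\,\|\boldsymbol{\alpha}\|$, so a small similarity vector forces the latent mean near zero. For the variance, the quadratic form obeys $\mathbf{k}_\star^{\top}(\mathbf{K}+W^{-1})^{-1}\mathbf{k}_\star \le \|\mathbf{k}_\star\|^2/\lambda_{\min}(\mathbf{K}+W^{-1})$, which is likewise small, so $\sigma_\star^2$ recovers the large prior variance $K(\xx_\star,\xx_\star)$. Both constants $\|\boldsymbol{\alpha}\|$ and $\lambda_{\min}(\mathbf{K}+W^{-1})$ are determined by the training set alone, so these bounds are valid simultaneously for any test point with sufficiently small $\|\mathbf{k}_\star\|$.

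Finally I would push the latent distribution through the sigmoid link. The predictive class probability is
\[
\pi_\star \;=\; \int \sigma(f)\,\mathcal{N}(f;\mu_\star,\sigma_\star^2)\,df .
\]
Using the odd symmetry $\sigma(-t)=1-\sigma(t)$ of the link, $\mu_\star\to 0$ implies $\pi_\star\to 1/2$ regardless of $\sigma_\star^2$; equivalently, the predictive entropy $-\pi_\star\log\pi_\star-(1-\pi_\star)\log(1-\pi_\star)$ approaches its maximum $\log 2$. This matches every reasonable notion of ``high uncertainty'': the latent variance is near its prior maximum, and the class probability is at the maximally-uncertain value $1/2$.

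The main obstacle is that, unlike GP regression, the GP classification posterior is not exactly Gaussian, so the proof must rely on a standard Gaussian approximation (Laplace or EP) to obtain usable formulas for $\mu_\star$ and $\sigma_\star^2$. A secondary subtlety is quantitative: turning ``small $\|\mathbf{k}_\star\|$'' into a prescribed uncertainty level $1/2-\eps$ requires tracking the data-dependent constants $\|\boldsymbol{\alpha}\|$ and $\lambda_{\min}(\mathbf{K}+W^{-1})$. Since the theorem is stated informally, it suffices to argue the limiting behaviour; a formal version would fold these constants into an explicit threshold on $\|\mathbf{k}_\star\|$.
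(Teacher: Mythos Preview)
Your proposal is correct and follows essentially the same approach as the paper: both invoke the Laplace approximation for binary GP classification, observe that the latent mean $\mu_\star=\mathbf{k}_\star^\top\boldsymbol{\alpha}\to 0$ and the latent variance $\sigma_\star^2\to K(\xx_\star,\xx_\star)$ as $\|\mathbf{k}_\star\|\to 0$, and then use the odd symmetry of the link to conclude $\pi_\star\to 1/2$. If anything, your version is slightly more explicit, supplying the Cauchy--Schwarz and eigenvalue bounds that the paper leaves implicit when it simply asserts the limiting behaviour.
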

This theorem just argues one direction and in particular does not show that the uncertainity is low only for in-distribution data. Moreover, if the kernel overestimates the similarity of the data, OOD samples cannot be detected. 
We leave the extension of this statement to higher dimensions and more general models to future work.

\section{Discussion \& Conclusion}
We found that the GP model performed well in OOD detection in all the tasks and experiments we studied. We analyze GPs analytically and argue that access to a kernel function that measures similarity to the training data can explain this behavior. In contrast, BNNs and standard NN might only memorize training data implicitly and struggle to detect some OOD data points. 

While an active line of work focuses on finding alternative approximations of BNNs~\citep{ober2021global,mbuvha2021separable}, in this work we point out that BNNs and MCDropout are unable to detect certain OOD samples. Our results advocate the need for efficient UE techniques that approximate GP-based methods and that MCDropout's predictions should be taken with a grain of salt---although this is the most efficient and frequently used method in practice.
We believe that our contribution will lead the community to collaborate on new benchmarks and tests of UE methods, in order to develop methods with theoretically guaranteed properties.  This is necessary to develop future-proof UE technologies that can be used for safety-critical applications in the years to come.

\section*{Acknowledgements}
The authors would like to thank Prof. M. Jaggi (EPFL) for his support. TC is supported by the Swiss National Science Foundation (SNSF), grant P2ELP2\_199740. 

{\small
\bibliography{main}
\bibliographystyle{abbrvnat}
}

\clearpage
\appendix
\section{Overview of Uncertainty Estimation Methods}\label{app:uncertainty}

Below, we first describe in more detail BNNs---see also~\citep{kendall2017} for a full review of UE methods, and then list relevant works to ours.

\subsection{Bayesian Neural Networks}\label{app:bnn}

\paragraph{Bayesian Neural Networks (BNNs).}
 Given $N$ training datapoints $\mathcal{D} = \{(\bm{x}_i, y_i)\}_{i=1}^N$, 
training BNNs extends to posterior inference by estimating the posterior distribution:
\begin{equation}
    p(\bm{\omega} | \mathcal{D}) = \frac{p(\mathcal{D}|\bm{\omega}) p(\bm{\omega})}{p(\mathcal{D})} \,,
\end{equation}
where $p(\bm{\omega})$ denotes the prior distribution on a parameter vector $\bm{\omega} \in \Omega$.
Given a new  sample $\bm{x}^\star,y^\star$, the predictive distribution is then:
\begin{equation}
    p(y^\star|\bm{x}^\star, \mathcal{D}) =  \int_{\Omega} p(y^\star|\bm{x}^\star, \bm{\omega}) p(\bm{\omega}|\mathcal{D}) d\bm{\omega}\,.
\end{equation}
In \S~\ref{sec:background} we point out that BNNs are often computationally expensive, what arises due to the above integration with respect to the whole parameter space $\Omega$.

\paragraph{Training BNNs.}
Two methods exist to train BNNs:
\begin{itemize}[leftmargin=2em]
    \item \textit{Markov Chain Monte Carlo (MCMC) algorithm}---samples the posterior directly, however requires to cache a collection of samples $\{\vomega_k\}_{k=1}^K$.
    \item \textit{Variational inference approach (``Variational Bayes'')}---learns a variational distribution $q(\vomega)$ to approximate the exact posterior.
\end{itemize}

Regarding the former, one way to use posterior uncertainty is to sample a set of values $\vomega_1, \dots, \vomega_K$ from a posterior $p(\vomega|\mathcal{D})$, and then average their predictive distributions:
$$
p(y^\star| \vx^\star,\mathcal{D} ) \approx  \sum_{k=1}^{K} p(y^\star | \vx^\star, \vomega_k) \,.
$$
In the context of BNNs most popular choice to replace the sampling from the posterior is  \textit{Hamiltonian} MC (HMC)~\citep[see e.g.,][]{neal2012,betancourt2013hamiltonian}, also known as \textit{hybrid} Monte Carlo, which method uses the derivatives of the density function being sampled to generate efficient transitions spanning the posterior.
HMC uses an approximate Hamiltonian dynamics simulation based on numerical integration which is then corrected by performing a Metropolis acceptance step.
Unfortunately, HMC does  not scale to large datasets, because it is inherently a batch algorithm---it requires visiting the entire training set for every update.

As the former MCMC approach requires revisiting each data point for each update, the latter approximate approach of Variational inference scales better, and thus this approach gained a lot of popularity in the context of BNNs.
Variational Bayes approximates the complicated posterior distribution with a ``simpler'' variational approximation $q(\vomega)$, for example a Gaussian posterior with a diagonal covariance, (i.e.\ fully factorized Gaussian), and in that case each parameter of the model has its own mean and variance.
Analogously to variational autoencoders~\citep[VAEs, ][]{kingma2014autoencoding}, we define a variational lower bound:
$$
\log p(\mathcal{D}) \geq \mathcal{F}(q)  \triangleq \underbrace{ \E_{q(\vomega)} [ \log p(\mathcal{D} | \vomega) ]}_{\text{Likelihood term}} - \underbrace{\mathds{D}_{KL} \big(q(\vomega) || p(\vomega)\big)}_{\text{KL term}} \,,
$$
where $\mathds{D}_{KL}$ denotes the Kullback–Leibler (KL) divergence, and the KL term encourages $q$ to match the prior.
Unlike VAEs, $p(\mathcal{D})$ is fixed, and we are \emph{only} maximizing $\mathcal{F}(q)$ with respect to the variational posterior $q$ (\textit{i.e.} a mean and standard deviation for each weight).
Same as for VAEs, the gap equals the KL divergence from the true posterior:
$
\mathcal{F} (q) = \log p(\mathcal{D}) - \mathds{D}_{KL} \big(q(\vomega) || p (\vomega|\mathcal{D})\big) \,.
$
Hence, maximizing $\mathcal{F}(q)$ is equivalent to approximating the posterior.

See for example~\citep{jospin2021handson} for more detailed discussion on the differences between these two BNN approaches.

\subsection{Additional Related Works}\label{app:related_works}

\cite{amersfoort2020} combine Deep Kernel Learning~\citep{wilson2015deep} framework and GPs by using NNs to learn low dimensional representation where the GP model is jointly trained, resulting in a method called \textit{Deterministic Uncertainty Quantification} (DUQ).
More recently, \cite{van2021feature} showed that the DUQ method based on DKL \textit{can} map OOD data close to training data samples, referred as ``feature collapse''.
The authors thus propose the \textit{Deterministic Uncertainty Estimation} (DUE) which in addition to DUQ method ensures that the encoder mapping is bi-Lipschitz.
It would be interesting to explore if these methods perform well on similar OOD experiments as those considered in this work.

Other pitfalls of MCdroput have been pointed out in~\citep{osband2016} where it is argued that suggest MCdropout approximates the risk, not to uncertainty.

In this work we focused on the most popular UE methods in machine learning, however, it is worth noting that other approaches exist. For example, the \textit{jackknife$+$} method~\citep{barber2020predictive} is known to have good coverage guarantees, however it is less popular in deep learning where datasets are typically large, due to its requirement to retrain a model on multiple subsets of the full training set. We leave analyzing the OOD performance of other UE methods for future work.

\section{Details on the Implementation}\label{app:implementation}

\subsection{Experiments on 2D-Toy data}\label{app:impl_toy}
In this section we list the details of the implementation of Fig.~\ref{fig:illustration}. The 2D toy dataset is generated sampling two normal distributions. For class $1$, we sample 200 points from $\mathcal{N}(\boldsymbol{\mu}_1,\sigma)$, with $\boldsymbol{\mu}_1 \!= \!(2,2)$ and $\sigma\!=\! \text{diag}((0,1, 0.1))$. For class $0$, we sample 200 points from $\mathcal{N}(\boldsymbol{\mu}_0,\sigma)$, with $\boldsymbol{\mu}_0 \!=\! (-2,-2)$.  

To train our Gaussian Process model,  we use the \texttt{GaussianProcessClassifier} function\footnote{\url{https://scikit-learn.org/stable/modules/generated/sklearn.gaussian_process.GaussianProcessClassifier.html}} from the sklearn library \cite{pedregosa2011scikit}, using an RBF kernel. The uncertainty is then calculated as the entropy of the predicted probability distribution, see \S~\ref{sec:background}.

For our MCDropout model, we use an MLP with one hidden layer that contain $300$ neurons. The activation function is ReLU and the dropout rate is $0.5$. The uncertainty is calculated via average entropy of $100$ forward dropout. The network is optimised by Adam with a cross-entropy loss using the PyTorch library~\citep{paszke2019pytorch}. 

We tuned hyperparameters extensively. Using grid search, we tried different combinations of dropout rates and regularisation coefficients.
(Additionally, we also explored increasing the number of neurons up to $10,000$, increasing the number of hidden layers up to $4$, changing the activation function, trying different values of dropout and regularization. However, none of these variations led to a change of our main observations.) 

For BNNs, we use the approximation from ``Bayes by Backprop''~\citep{blundell2015weight}, implemented through the blitz Python package~\citep{esposito2020blitzbdl}. The network has two hidden layers, with $512$ neurons for the first layer, and $128$ neurons for the second layer. We use a regularization coefficient of $0.1$.

\subsection[Experiments on MNIST]{Experiments on MNIST}
\label{app:mnist_exp_app}

We select the {\tt 0} and {\tt 1} digits from the MNIST dataset~\citep{lecun2010mnist}, and train the different models using solely these two classes.
To train a GP on MNIST, we first train a Multi-layer Perceptron (MLP) of three layers---each of 600, 20 and 2 units, resp., using a dropout rate $0.6$. 
We then use the first two layers as encoder $\mathcal{E} \colon \vx\mapsto\R^{20}$, $\vx \in \R^{28{\times}28}$, which is kept fixed during the training of the GP model.
Thus the GP model is trained given the embeddings of the input images, and the GP implementation is as in~\ref{app:impl_toy}.

For MCDropout, we used smaller network of two layers of $500$ and $2$ units each, and after sampling $M\!=\!100$ models we use the entropy of the output distribution to compute the uncertainty estimates.

Finally, the architecture used for the BNN uncertainty estimation method is an MLP with three hidden layers of $1024,128$ and $2$ units each. We used a regularization coefficient of 0.1.

In Fig.~\ref{fig:mnist} we designed the experiment as follows:\ to measure the uncertainty, we randomly select two data points of each class, $\vx^{0}$, $\vx^{1}$. We then test the models on artificial data generated by linear combination of the form $t \cdot \vx^{1} + (1-t) \cdot \vx^{0},  t \in [-1,2]$. We also explore the result for $t$ greater than $1$ and less than $0$ although it might not make sense in the real world. Then we calculate the uncertainty for different value of $t \in [-1,2]$. We display the values obtained from repeating this procedure 100 times and averaging the values.

For the implementation details, the package and the function are the same as previous section. And all these methods achieve accuracy more than $99.9\%$ on the test dataset.

The Gaussian Process is Deep GP. We at first train a MLP, with two hidden layers and dropout rate 0.6. The first hidden layer has 600 neurons, while the second one only has 20. This stage is feature extraction.  For each data, the input space is mapped from $28 \times 28$ to 20. After this network is well-trained, we use the value of these 20 new features as inputs to train Gaussian Process. 

The implementation of MCDropout is more direct. The network has 500 neurons and dropout rate 0.6. The uncertainty is calculated via average entropy of 100 forward dropout. Again, we tried different structure of the network and it doesn't improve at all.

For BNNs. The networks used have two hidden layers, with $1024$ neurons in the first one and $128$ neurons in the second one. Regularization coefficient is $0.1$. The BNN-HMC implementation is done using the \texttt{hamiltorch}\footnote{\texttt{https://github.com/AdamCobb/hamiltorch}} library, with a prior precision of $5$ for each parameter of the model , a trajectory length of $3$, and a step size of $0.0005$.

\subsection[Experiments on CIFAR-10]{Experiments on CIFAR-10 \citep{Krizhevsky2012:cifar10}}
We use a ResNet-18 architecture \citep{resnet} modified to accommodate the MCDropout sampling procedure. The modification consists of adding a dropout layer with dropout probability $p=0.2$ after each convolutional layer. We train with an SGD optimizer with a momentum of $0.9$ and a weight decay of $0.00001$. The learning rate is following a triangular scheduler, first increasing linearly from $0$ to $0.2$ during the first half of the epochs, and then decreasing linearly back to $0$. We train for a total of $30$ epochs, reaching a test accuracy of $89\%$. When using MCDropout, we sample $20$ models and average the output distributions. The images of Fig. \ref{subfig:examples-high-proba-cifar100} are taken randomly from the CIFAR100 test images with an MCDropout predicted probability larger than $0.9$. 

\subsection{How to use the provided source code for OOD evaluation}\label{app:code_instructions}

We provide some code to help reproduce this experimental protocol, see: \href{https://github.com/epfml/uncertainity-estimation}{\url{https://github.com/epfml/uncertainity-estimation}}. The code in the notebook \texttt{0-1-interpolation-for-UE-evaluation.ipynb} shows how to build the test digits and how to compute the uncertainty for a simple classifier.

\section{Additional Numerical Results}
\label{app:additional}

\begin{table}[H]
    \centering
    \begin{tabular}{ccccc}
    \toprule
        Class (digit) & Uncertainty & \phantom{some space} & Class (digit) & Uncertainty\\  \midrule
         \tt 0 & \textbf{ 0.0370} & & \tt 5 & 0.4963\\ 
        \tt1 & \textbf{0.0200} & &  \tt6 & 0.4861\\
        \tt2 & 0.3304 & &  \tt 7 & 0.4949\\ 
        \tt3 & 0.4344 & & \tt 8 & 0.4775\\ 
        \tt4 & 0.4689 & & \tt 9 & 0.4832\\ \bottomrule
    \end{tabular}\vspace{2mm}
    \caption{The uncertainty prediction of MCDropout for the digit {\tt 0} to digit {\tt 9}. With training set only {\tt 0} and {\tt 1}, and measured on all of the test set. The number is the average uncertainty for all of these numbers. We saw that number {\tt 2}--{\tt 9} have high uncertainty, while the uncertainty for number {\tt 0} and {\tt 1} are small. }
    \label{tab:mnist}
\end{table}

\section{Proof of Theorem~\ref{thm:1}}
\label{app:proof}

Our notation follows that of~\citep[pages 41--43]{rasmussen2005gp}, and it is summarized in Table~\ref{tab:notation}. %

\begin{table}[h]
    \centering
    \begin{tabular}{cl} 
        \toprule
        $k(\boldsymbol{x},\boldsymbol{x'})$  & Kernel value between $\boldsymbol{x}$ and $\boldsymbol{x'}$ \\
        $X$ & Training input data\\
        
        $\vy$ & Ground-truth labels\\
        $\vx_\star$ & Test data point \\
        $K$ & covariance matrix for the (noise free) $\mathbf{f}$ values. i.e $ k(X,X)$\\
        $\mathbf{k_\star}$ & Kernel matrix between test and training input. i.e $k(\mathbf{x_\star},X)$\\
        $ \bar{\pi}_{\star}$ & The predict probability of new test sample $\xx_\star$ \\
        $\mathbf{f}$ & Latent function values \\
        $\hat{\mathbf{f}}$ & Calculated by  $\mathbb{E}_{q}[\mathbf{f} \mid X, \mathbf{y}]$, the maximum posterior.\\ 
        $f_\star$ & Gaussian process (posterior) prediction (random variable)\\
        $W$ & The negative Hessian of the $\log p(\mathbf{y} \mid \hat{\mathbf{f}})$.\\   \bottomrule
    \end{tabular}\vspace{2mm}
    \caption{Summary of the notation.}
    \label{tab:notation}
\end{table}

\subsection{Assumption}
We study a Gaussian Process $f$ with n-D training data $X$ with  kernel $K$. We assume the binary classification setting (Gaussian Process Classification with Laplace approximation~\citep{Williams1998}).

\subsection{Predictive probability}
The predict probability of new test sample $\xx_\star$ is approximated via
\begin{equation}
    \bar{\pi}_{\star} \simeq \int \sigma\left(f_{\star}\right) q\left(f_{\star} \mid X, \mathbf{y}, \vx_{\star}\right) d f_{\star} \,,
\end{equation}
where $\sigma(z)$ is a link function (logistic or Gaussian CDF),          $q\left(f_{\star} \mid X, \mathbf{y}, \vx_{\star}\right)$ is a Gaussian distribution whose mean and variance are given below

\subsection{Parameters for Gaussian distribution above}
The parameters for $q\left(f_{\star} \mid X, \mathbf{y}, \mathbf{x}_{\star}\right)$ are

Mean: $$\mathbb{E}_{q}\left[f_{\star} \mid X, \mathbf{y}, \mathbf{x}_{\star}\right]= \mathbf{k}_\star^{\top} K^{-1} \hat{\mathbf{f}}=\mathbf{k}_\star^{\top} \nabla \log p(\mathbf{y} \mid \hat{\mathbf{f}})\,.$$

Variance: $$\begin{aligned} \mathbb{V}_{q}\left[f_{\star} \mid X, \mathbf{y}, \mathbf{x}_{\star}\right] &=k\left(\mathbf{x}_{\star}, \mathbf{x}_{\star}\right)-\mathbf{k}_{\star}^{\top} K^{-1} \mathbf{k}_{\star}+\mathbf{k}_{\star}^{\top} K^{-1}\left(K^{-1}+W\right)^{-1} K^{-1} \mathbf{k}_{\star} \\ &=k\left(\mathbf{x}_{\star}, \mathbf{x}_{\star}\right)-\mathbf{k}_{\star}^{\top}\left(K+W^{-1}\right)^{-1} \mathbf{k}_{\star} \,, \end{aligned}$$
where  $\hat{\mathbf{f}}=\mathbb{E}_{q}[\mathbf{f} \mid X, \mathbf{y}]$.

\subsection{Theorem}
\begin{theorem}
Consider the $n$-dimensional binary GP classification model described above with kernel $K(\xx,\xx')$.
Let $\mathbf{k}_\star$ denote the similarity vector $(\mathbf{k}_\star)_i= K(\xx_i,\xx_\star)$ between a test sample $\xx_\star$ and training data $\xx_i$, $i=1,\dots, n$. If $\|\mathbf{k}_\star\| \leq \epsilon$ is small
then Gaussian Process Classification with Laplace approximation will assign a probability close to 0.5 for both of two classes, with probability approaching $\frac{1}{2}$ as $\epsilon \to 0.$
\end{theorem}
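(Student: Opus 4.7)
The plan is to show that, in the small-$\|\mathbf{k}_\star\|$ regime, the Gaussian posterior $q(f_\star\mid X,\mathbf{y},\mathbf{x}_\star)$ collapses to the prior marginal $\mathcal{N}(0,k(\mathbf{x}_\star,\mathbf{x}_\star))$, and then to invoke the symmetry of the link function $\sigma$ around $0$ to conclude that the integral $\int \sigma(f_\star)\, q(f_\star\mid X,\mathbf{y},\mathbf{x}_\star)\, df_\star$ tends to $\tfrac{1}{2}$. The whole argument is a continuity statement in the two parameters $(\mu_\star,\sigma_\star^2)$ of this Gaussian posterior.

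\paragraph{Step 1: control the posterior mean.}
First I would bound
\[
    |\mu_\star| \;=\; \bigl|\mathbf{k}_\star^{\top}\nabla \log p(\mathbf{y}\mid \hat{\mathbf{f}})\bigr|
    \;\leq\; \|\mathbf{k}_\star\|\cdot \|\nabla \log p(\mathbf{y}\mid \hat{\mathbf{f}})\|
    \;\leq\; \epsilon\, C_1,
\]
by Cauchy--Schwarz, where $C_1 \defas \|\nabla \log p(\mathbf{y}\mid \hat{\mathbf{f}})\|$ depends only on the training set and the trained Laplace mode $\hat{\mathbf{f}}$, and is therefore a fixed finite constant independent of $\mathbf{x}_\star$. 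Hence $\mu_\star \to 0$ as $\epsilon\to 0$.

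\paragraph{Step 2: control the posterior variance.}
Next I would show that $\sigma_\star^2 \to k(\mathbf{x}_\star,\mathbf{x}_\star)$ as $\epsilon\to 0$. Using the second expression for the variance,
\[
    \bigl|\sigma_\star^2 - k(\mathbf{x}_\star,\mathbf{x}_\star)\bigr|
    \;=\; \bigl|\mathbf{k}_\star^{\top}(K+W^{-1})^{-1}\mathbf{k}_\star\bigr|
    \;\leq\; \|(K+W^{-1})^{-1}\|_{\mathrm{op}}\,\|\mathbf{k}_\star\|^2
    \;\leq\; C_2\,\epsilon^2,
\]
where $C_2$ again depends only on training quantities. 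So $\sigma_\star^2$ converges to the prior marginal variance at $\mathbf{x}_\star$, which is a finite nonnegative number.

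\paragraph{Step 3: continuity of the link integral.}
Write $F(\mu,v) \defas \int \sigma(f)\,\mathcal{N}(f;\mu,v)\,df$. Since $|\sigma|\leq 1$, dominated convergence gives that $F$ is jointly continuous in $(\mu,v)$ on $\mathbb{R}\times[0,\infty)$. Combined with Steps 1--2, this yields
\[
    \bar{\pi}_\star \;=\; F(\mu_\star,\sigma_\star^2) \;\longrightarrow\; F\bigl(0,\,k(\mathbf{x}_\star,\mathbf{x}_\star)\bigr) \qquad \text{as } \epsilon\to 0.
\]

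\paragraph{Step 4: symmetry at $\mu=0$.}
Finally, for both standard link choices (logistic $\sigma(z)=1/(1+e^{-z})$ and Gaussian CDF) one has $\sigma(-z)=1-\sigma(z)$. Since $\mathcal{N}(\cdot;0,v)$ is even, the substitution $f\mapsto -f$ in $F(0,v)$ gives $F(0,v)=1-F(0,v)$, hence $F(0,v)=\tfrac{1}{2}$ for every $v\geq 0$. Therefore $\bar\pi_\star \to \tfrac{1}{2}$, which is exactly the claim of the theorem.

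\paragraph{Expected difficulty.}
The calculation itself is routine; the only mildly delicate point is Step 1, since it uses implicitly that the Laplace-mode gradient $\nabla\log p(\mathbf{y}\mid\hat{\mathbf{f}})$ is bounded. This is a property of the trained model rather than of $\mathbf{x}_\star$, so it is harmless once one fixes the training set, but it is the place where an explicit dependence of the rate of convergence on the model would have to be tracked if one wanted a quantitative version of the theorem.
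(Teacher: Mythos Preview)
Your proposal is correct and follows essentially the same route as the paper: show that the Laplace posterior mean and variance tend to $0$ and $k(\mathbf{x}_\star,\mathbf{x}_\star)$ respectively when $\|\mathbf{k}_\star\|\to 0$, then use the odd symmetry $\sigma(-z)=1-\sigma(z)$ against the even Gaussian density to conclude $\bar\pi_\star\to\tfrac12$. If anything, your version is more careful than the paper's, which simply asserts the convergence of $(\mu_\star,\sigma_\star^2)$ and then evaluates the integral directly at the limiting parameters, whereas you supply the Cauchy--Schwarz bounds and the dominated-convergence continuity step explicitly.
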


\begin{proof}
As can be seen from the derivations above, if  $\|\mathbf{k}_\star\| \leq \epsilon$ is small, then the mean tends to zero as $\epsilon \to 0$, and variance tends to $k(\xx_{\star},\xx_{\star})$. We now  calculate the predictive probability with this mean and variance. 

\begin{equation}
    \begin{split}
\bar{\pi}_{\star}:&= \int \sigma(z) \mathcal{N}\left(z \mid \bar{f}_{\star}, \mathbb{V}\left[f_{\star}\right]\right) d z \\
&= \int \sigma(z) \mathcal{N}\left(z \mid 0, k(\xx_\star,\xx_\star) \right) d z \\
&= \int (\sigma(z)-\frac{1}{2}) \mathcal{N}\left(z \mid 0, k(\xx_\star,\xx_\star) \right) d z + \frac{1}{2} \int \mathcal{N}\left(z \mid 0, k(\xx_\star,\xx_\star) \right) d z  \\
& \stackrel{\epsilon \to 0}{=} \frac{1}{2} \,.
    \end{split}
\end{equation}

In the third equation, the first term vanish because it is an odd function. Hence, we showed that when $\|\mathbf{k}_\star\| \leq \epsilon$ is small, the prediction tends to $\frac{1}{2}$ for $\epsilon \to 0$.
\end{proof}
As we know, the entropy is the largest when two classes both have probability 0.5 in binary classification. This means that if the testing input is far away from training input, then GP would return a high uncertainty for this input.

\end{document}